\newtheorem{proposition}{Proposition}
\newif\ifarxiv
\newcommand{\bmx}[0]{\begin{bmatrix}}
\newcommand{\emx}[0]{\end{bmatrix}}
\title{Deep Directed Generative Autoencoders}
\author{Sherjil Ozair \\
Indian Institute of Technology Delhi \\
\And Yoshua Bengio\\
Universit\'e de Montr\'eal \\
CIFAR Fellow \\
}
\begin{document}

\maketitle

\begin{abstract}
For discrete data, the likelihood $P(x)$ can be rewritten exactly and
parametrized into $P(X=x)=P(X=x|H=f(x)) P(H=f(x))$ if $P(X|H)$ has enough
capacity to put no probability mass on any $x'$ for which $f(x')\neq f(x)$, 
where $f(\cdot)$ is a deterministic discrete function. The log of the
first factor gives rise to the log-likelihood reconstruction error of an
autoencoder with $f(\cdot)$ as the encoder and $P(X|H)$ as the
(probabilistic) decoder. The log of the second term can be seen as a
regularizer on the encoded activations $h=f(x)$, e.g., as in sparse
autoencoders. Both encoder and decoder can be represented by a deep neural
network and trained to maximize the average of the optimal log-likelihood $\log p(x)$. The objective
is to learn an encoder $f(\cdot)$ that maps $X$ to $f(X)$
that has a much simpler distribution than $X$ itself, estimated by $P(H)$. This ``flattens
the manifold'' or concentrates probability mass in a smaller number of
(relevant) dimensions over which the distribution factorizes. 
Generating samples from the model is straightforward using ancestral
sampling. One challenge
is that regular back-propagation cannot be used to obtain the
gradient on the parameters of the encoder, but we find that using
the straight-through estimator works well here.  We also find that although
optimizing a single level of such architecture may be difficult, much better
results can be obtained by pre-training and
stacking them, gradually transforming the data
distribution into one that is more easily captured by a simple parametric
model.
\end{abstract}

\section{Introduction}

Deep learning is an aspect of machine learning that regards the question of
learning multiple levels of representation, associated with different
levels of abstraction~\citep{Bengio-2009-book}.  These representations are
distributed~\citep{Hinton89b}, meaning that at each level there are many
variables or features, which together can take a very large number of
configurations. 

An important conceptual challenge of deep learning is the following
question: {\em what is a good representation}? The question is most
challenging in the unsupervised learning setup. Whereas we understand
that features of an input $x$ that are predictive of some target $y$
constitute a good representation in a supervised learning setting,
the question is less obvious for unsupervised learning. 

\subsection{Manifold Unfolding}

In this paper we explore
this question by following the geometrical inspiration introduced
by~\citet{Bengio-arxiv2014}, based on the notion of {\em manifold unfolding},
illustrated in Figure~\ref{fig:ddga}. It was already observed
by~\citet{Bengio-et-al-ICML2013} that representations obtained
by stacking denoising autoencoders or RBMs appear to yield ``flatter'' or
``unfolded'' manifolds:
if $x_1$ and $x_2$ are examples from the data generating distribution $Q(X)$ and $f$ is
the encoding function and $g$ the decoding function, 
then points on the line $h_\alpha = \alpha f(x_1) + (1-\alpha) f(x_2)$ ($\alpha \in [0,1]$)
were experimentally found to correspond to probable input configurations, i.e., $g(h_\alpha)$ looks like
training examples (and quantitatively often comes close to one). This property
is not at all observed for $f$ and $g$ being the identity function: interpolating
in input space typically gives rise to non-natural looking inputs (we can immediately
recognize such inputs as the simple addition of two plausible examples).
This is illustrated in Figure~\ref{fig:flattening-manifold}.
It means that the input manifold (near which the distribution concentrates)
is highly twisted and curved and occupies a small volume in input space.
Instead, when mapped in the representation space of stacked autoencoders (the output of $f$), 
we find that
the convex between high probability points (i.e., training examples) is often
also part of the high-probability manifold, i.e., the transformed manifold
is flatter, it has become closer to a convex set.

\begin{figure}[H]
\begin{center}
\centerline{\includegraphics[width=\columnwidth]{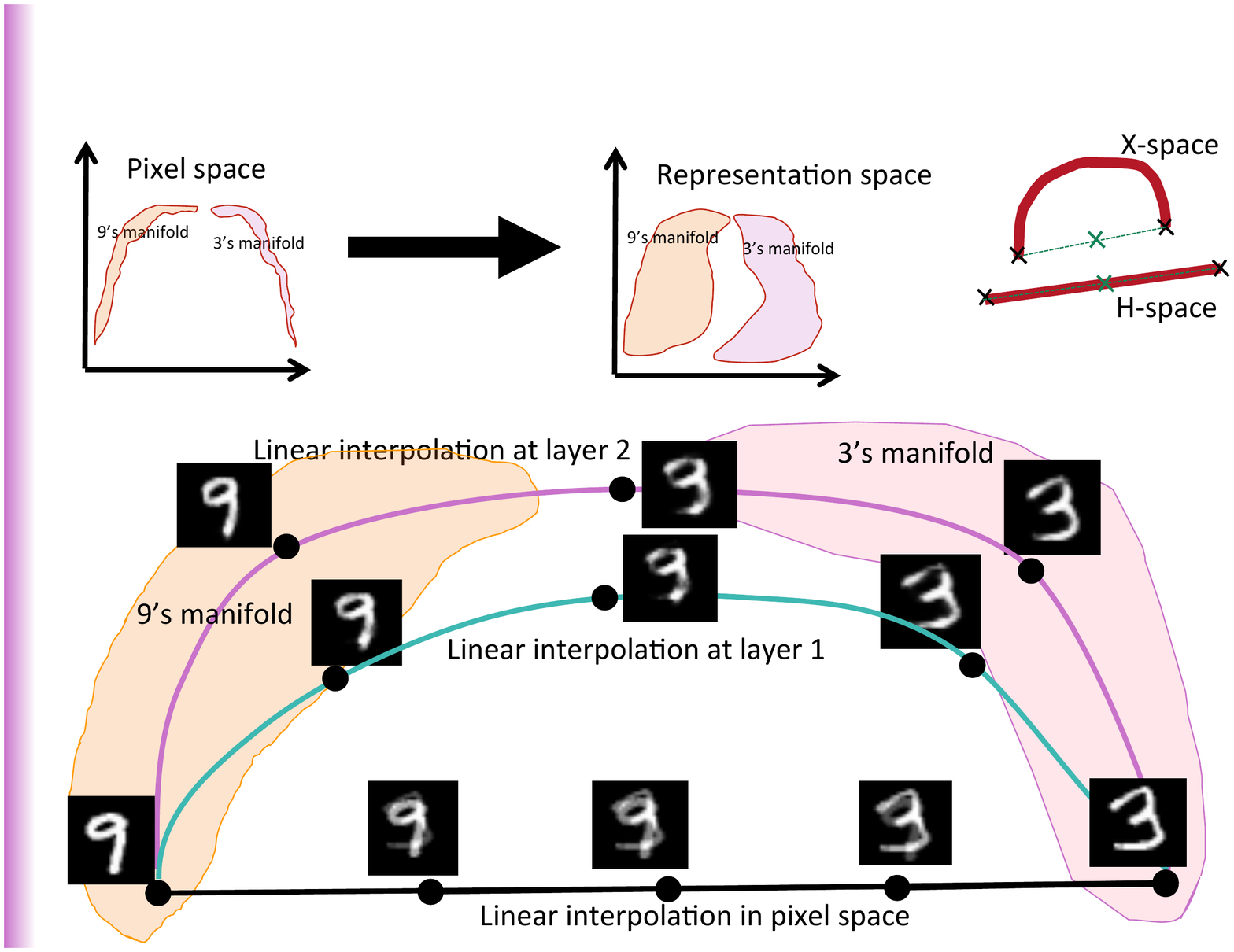}}
\caption{Illustration of the flattening effect observed in~\citet{Bengio-et-al-ICML2013}
by stacks of denoising autoencoders or RBMs, trained on MNIST digit images.
Whereas interpolating in pixel space ($X$-space)
between dataset examples (such as the 9 on the bottom
left and the 3 on the bottom right) gives rise to images unlike those in
the training set (on the bottom interpolation line), interpolating at the
first and second level of the stack of autoencoders ($H$-space) gives rise to images
(when projected back in input space, see text) that look like dataset examples.
These experiments suggest that the manifolds near which data concentrate, which are very
twisted and occupy a very small volume in pixel space, become flatter
and occupy more of the available volume in representation-space. Note
in particular how the two class manifolds have been brought closer to each other
(but interestingly there are also easier to separate, in representation space).
The manifolds associated with each class have become closer to a convex set, i.e., flatter.
}
\label{fig:flattening-manifold}
\end{center}
\end{figure} 

\subsection{From Manifold Unfolding to Probability Modeling}

If it is possible to unfold the data manifold into a nearly
flat and convex manifold (or set of manifolds), then estimating
the probability distribution of the data becomes much easier.
Consider the situation illustrated in Figure~\ref{fig:ddga}: a highly
curved 1-dimensional low-dimensional manifold is unfolded so that it occupies
exactly one dimension in the transformed representation (the ``signal dimension''
in the figure). Moving on the manifold corresponds to changing the hidden unit corresponding to
that signal dimension in representation space. On the other hand, moving orthogonal
to the manifold in input space corresponds to changing the ``noise dimension''.
There is a value of the noise dimension that corresponds to being on the
manifold, while the other values correspond to the volume filled by
unlikely input configurations. With the manifold learning mental picture,
estimating the probability distribution of the data basically amounts
to distinguishing between ``off-manifold'' configurations, which should have
low probability, from ``on-manifold'' configurations, which should have
high probability. Once the manifold is unfolded, answering that question
becomes very easy.

\begin{figure}[ht]
\begin{center}
\centerline{\includegraphics[width=0.35\columnwidth]{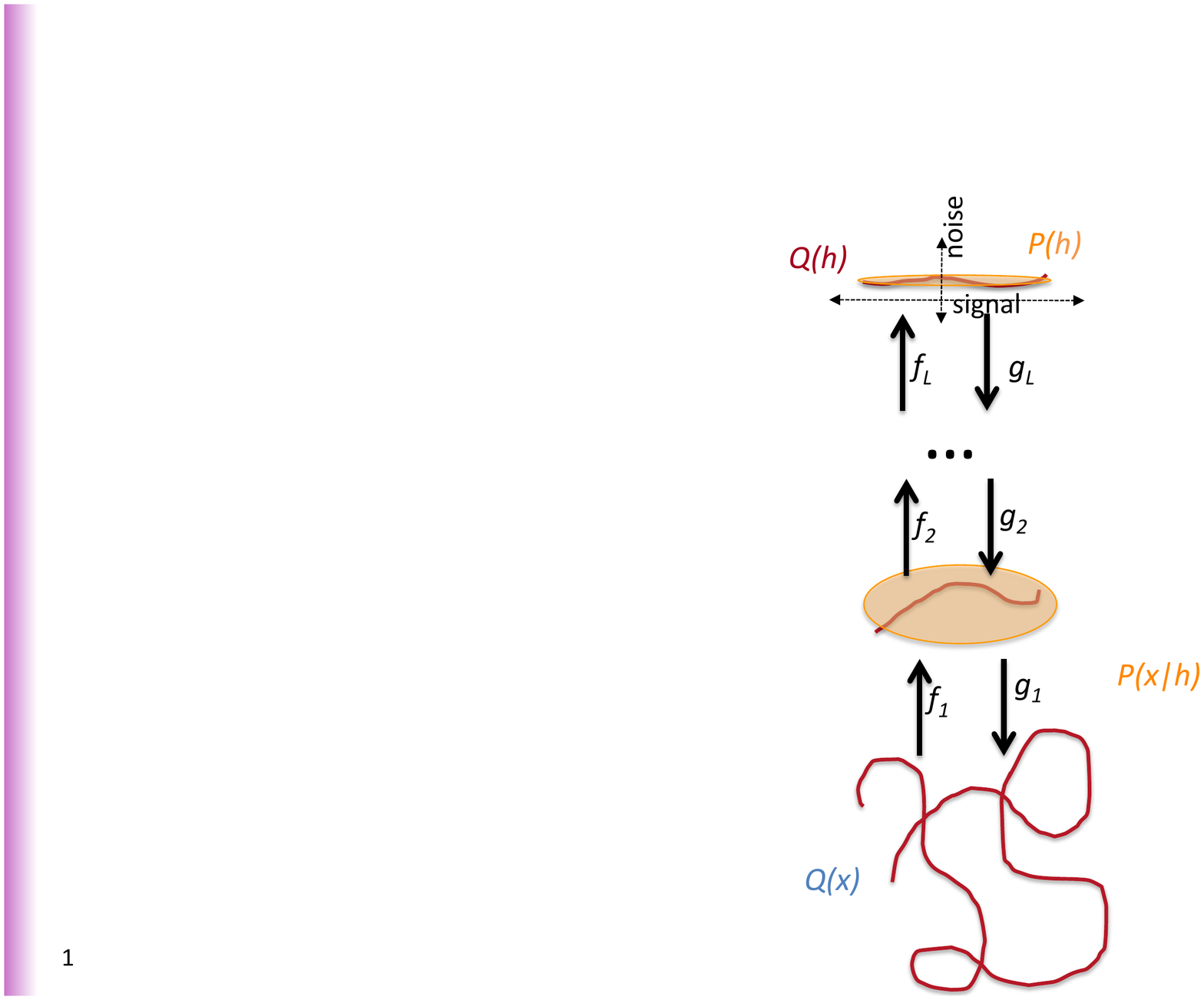}}
\caption{Illustration of the work that the composed encoder $f = f_L \circ \ldots f_2 \ldots f_1$
should do: flatten the manifold (more generally the region where probability mass is
concentrated) in such a way that a simple (e.g. factorized) prior distribution $P(h)$ can well approximate
the distribution of the transformed data $Q(h)$ obtained by applying $f$ to the data $x\sim Q(x)$.
The red curve indicates the manifold, or region near which $Q(x)$ concentrates. The first
encoder $f_1$ is not powerful and non-linear enough to untwist the manifold and flatten
it, so applying a factorized prior at that level would yield poor samples, because most
generated samples from the prior (in orange) would fall far from the transformed data
manifold (in red). At the top level, if training is successful, the transformed data distribution $Q(h)$
concentrates in a small number of directions (the ``signal'' directions, in the figure),
making it easy to distinguish the inside of the manifold (signal) from the outside (noise),
i.e. to concentrate probability mass in the right places.
At that point, $Q(h)$ and $P(h)$ can match better, and less probability mass is wasted
outside of the manifold.
}
\label{fig:ddga}
\end{center}
\end{figure} 

If probability mass is concentrated in a predictible and regular way
in representation space ($H$), it becomes easy to capture the data distribution $Q(X)$.
Let $H=f(X)$ be the random variable associated with the transformed data $X \sim Q(X)$,
with $Q(H)$ its marginal distribution.
Now, consider for example the extreme case where $Q(H)$ is almost factorized 
and can be well approximated by a factorized distribution model $P(H)$.
It means that the probability distribution in $H$-space can be captured
with few parameters and generalization is easy. Instead, when
modeling data in $X$-space, it is very difficult to find a family of
probability distributions that will put mass where the examples are
but not elsewhere. What typically happens is that we end up choosing
parameters of our probability model such that it puts a lot more
probability mass outside of the manifold than we would like. The
probability function is thus ``smeared''. We clearly get that effect
with non-parametric kernel density estimation, where we see that
the kernel bandwidth controls the amount of smearing, or smoothing.
Unfortunately, in a high-dimensional space, this puts a lot more
probability mass outside of the manifold than inside. Hence what
we see the real challenge as finding an encoder $f$ that transforms
the data distribution $Q(X)$ from a complex and twisted one into
a flat one, $Q(H)$, where the elements $H_i$ of $H$ are easy
to model, e.g. they are independent.

\section{Directed Generative Autoencoder (DGA) for Discrete Data}

We mainly consider in this paper the case of a discrete variable, which is simpler 
to handle.
In that case, a Directed Generative Autoencoder (DGA) is a model over the random variable $X$
whose training criterion is as follows
\begin{equation}
\label{eq:dga}
 \log P(X=x | H=f(x)) + \log P(H=f(X))
\end{equation}
where the deterministic function $f(\cdot)$ is called the encoder
and the conditional distribution $P(x|h)$ is called the decoder 
or decoding distribution. As shown below (Proposition~\ref{prop:exact-dga}), 
this decomposition becomes
exact as the capacity of the decoder $P(x|h)$ increases sufficiently to capture
a conditional distribution that puts zero probability on any $x'$ for which $f(x')\neq h$. 
The DGA is parametrized from two components:
\begin{enumerate}
\item $P(X=x | H=f(x))$ is an autoencoder with a {\em discrete}
representation $h=f(x)$. Its role in the log-likelihood
training objective is to make sure that $f$ preserves as much
information about $x$ as possible.
\item $P(H=h)$ is a probability model of the samples $H=f(X)$
obtained by sending $X$ through $f$. Its role in the log-likelihood
training objective is to make sure that $f$ transforms $X$
in a representation that has a distribution that can be well
modeled by the family of distributions $P(H)$.
\end{enumerate}

What can we say about this training criterion?

\begin{proposition}
\label{prop:exact-dga}
There exists a decomposition of the likelihood into \mbox{$P(X=x) = P(X=x | H=f(x)) P(H=f(X))$}
that is exact for discrete $X$ and $H$, and it is achieved when $P(x |h)$ is zero
unless $h=f(x)$. When $P(x|h)$ is trained with pairs $(X=x,H=f(x))$, it estimates
and converges (with enough capacity) to a conditional probability distribution which
satisfies this contraint. When $P(x|h)$ does not satisfy the condition, then
the unnormalized estimator \mbox{$P^*(X=x) = P(X=x|H=f(x))P(H=f(X))$} underestimates the true probability
\mbox{$P(X=x) = \sum_h P(x|H=h) P(H=h)$}.
\end{proposition}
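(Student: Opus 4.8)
The plan is to establish the three claims of the proposition in sequence, since each is essentially an algebraic consequence of the law of total probability once the right decomposition is written down. First I would start from the exact marginalization $P(X=x) = \sum_h P(x\,|\,H=h)\,P(H=h)$, which holds for any joint distribution over the discrete pair $(X,H)$. The key observation is that we are free to choose the conditional $P(x\,|\,h)$ subject only to the constraint that it be a valid conditional distribution and that it be compatible with the deterministic coupling $H=f(X)$. So the first step is to exhibit a particular choice: set $P(x\,|\,h)$ to be supported only on the preimage $f^{-1}(h) = \{x' : f(x')=h\}$, for instance $P(x\,|\,h) = Q(x)\,\mathbb{1}[f(x)=h] / \sum_{x':f(x')=h} Q(x')$ when one wants to match a target $Q$, or more simply any distribution vanishing off $f^{-1}(h)$. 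Under this choice every term in the sum $\sum_h P(x\,|\,H=h)P(H=h)$ vanishes except the single term $h=f(x)$, because $P(x\,|\,H=h)=0$ whenever $h\neq f(x)$. Hence $P(X=x) = P(x\,|\,H=f(x))\,P(H=f(x))$, which is the claimed exact decomposition.

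Second, for the consistency-of-training claim, I would argue that if $P(x\,|\,h)$ is fit by maximum likelihood on pairs $(x,f(x))$ drawn from the data, then in the infinite-capacity limit the minimizer of the conditional cross-entropy is the true conditional of $X$ given $H=f(X)$ under the data-induced joint. But under that joint, $H=f(X)$ is a deterministic function of $X$, so the event $\{H=h\}$ is exactly $\{X\in f^{-1}(h)\}$, and therefore the true conditional $P(x\,|\,H=h)$ puts all its mass on $f^{-1}(h)$ — i.e. it is zero unless $f(x)=h$. This is just the standard fact that a sufficiently flexible model trained by log-loss recovers the conditional distribution of the data; the only thing to note is that this recovered conditional automatically satisfies the support constraint.

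Third, for the underestimation claim, suppose $P(x\,|\,h)$ does \emph{not} vanish off $f^{-1}(h)$. Compare $P^*(X=x) = P(x\,|\,H=f(x))\,P(H=f(x))$ with the true marginal $P(X=x) = \sum_h P(x\,|\,H=h)\,P(H=h)$. The term $h=f(x)$ in the sum is exactly $P^*(X=x)$, and all remaining terms $\sum_{h\neq f(x)} P(x\,|\,H=h)\,P(H=h)$ are nonnegative, so $P^*(X=x) \le P(X=x)$; the inequality is strict precisely when some $h\neq f(x)$ has both $P(H=h)>0$ and $P(x\,|\,H=h)>0$, i.e. when the decoder leaks mass onto $x$ from a representation other than $f(x)$. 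This also explains the word ``unnormalized'' in the statement: $\sum_x P^*(X=x) = \sum_x \sum_{h} P(x\,|\,H=h)P(H=h)\mathbb{1}[h=f(x)] \le 1$, with equality iff the support condition holds for $P(H)$-almost every $h$.

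The only genuinely delicate point, and the one I would spend the most care on, is the second claim: stating precisely in what sense ``enough capacity'' and ``converges'' are meant. The cleanest route is to phrase it as a population statement — the unique minimizer over \emph{all} conditional distributions of $\mathbb{E}_{x\sim Q}[-\log P(f(x)\text{-indexed model at }x)]$ is the data conditional — and then remark that this minimizer has the stated support property, sidestepping any finite-sample or optimization-dynamics subtleties. Everything else is bookkeeping with the total-probability formula and the indicator $\mathbb{1}[f(x)=h]$.
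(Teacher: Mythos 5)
Your proof is correct and follows essentially the same route as the paper's: both establish the exact decomposition from the deterministic coupling $H=f(X)$ via the total-probability sum $\sum_h P(x\,|\,H=h)P(H=h)$ in which only the $h=f(x)$ term survives under the support condition, invoke consistency of maximum-likelihood fitting of the conditional for the second claim, and obtain the underestimation bound by dropping the nonnegative terms with $h\neq f(x)$. Your explicit construction of a decoder supported on $f^{-1}(h)$, the strictness condition, and the observation that $\sum_x P^*(X=x)\le 1$ are somewhat more detailed than the paper's presentation, but the underlying argument is the same.
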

\begin{proof}
We start by observing that because $f(\cdot)$ is deterministic, its value
$f(x)$ is perfectly predictible from the knowledge of $x$, i.e.,
\begin{equation}
  P(H=f(x) | X=x) = 1.
\end{equation}
Therefore, we can multiply this value of 1 by $P(X)$ and obtain the joint $P(X,H)$:
\begin{equation}
 P(X=x) = P(H=f(x) | X=x) P(X=x) = P(X=x, H=f(x))
\end{equation}
for any value of $x$.
Now it means that there exists a parametrization of the joint $P(X,H)$
into $P(H)P(X|H)$ that achieves
\begin{equation}
 P(X=x) = P(X=x | H=f(X)) P(H=f(X))
\end{equation}
which is the first part of the claimed result. Furthermore, for exact relationship
with the likelihood is achieved
when $P(x|h)=0$ unless $h=f(x)$, since with that condition,
\[
  P(X=x) = \sum_h P(X=x|H=h) P(H=h) = P(X=x|H=f(x)) P(H=f(x))
\]
because all the other terms of the sum vanish. Now, consider the case (which
is true for the DGA criterion) where the parameters
of $P(x|h)$ are only estimated to maximize the expected value of the
conditional likelihood $\log P(X=x | H=f(x))$. Because the maximum
likelihood estimator is consistent, if the family of distributions used
to estimate $P(x|h)$ has enough capacity to contain a solution
for which the condition of $P(x|h)=0$ for $h \neq f(x)$ is 
satisfied, then we can see that with enough capacity (which may also
mean enough training time), $P(x|h)$ converges to a solution that satisfies
the condition, i.e., for which $P(x) = P(x|H=f(x))P(H=f(x))$.
In general, a learned decoder $P(x|h)$ will not achieve
the guarantee that $P(x|h)=0$ when $h \neq f(x)$. However, the correct 
$P(x)$, for given $P(x|h)$ and $P(h)$ can always be written
\[
  P(x) = \sum_h P(x|h) P(h) \geq P(x|h=f(x)) P(H=f(x)) = P^*(x)
\]
which proves the claim that $P^*(x)$ underestimates the true likelihood.
\end{proof}

What is particularly interesting about this bound is that {\em as training
progresses and capacity increases} the bound becomes tight.
However, if $P(x|h)$ is a parametric distribution (e.g. factorized Binomial,
in our experiments) whose parameters (e.g. the Binomial probabilities)
are the output of a neural net, increasing the capacity of the neural net may not be sufficient
to obtain a $P(x|h)$ that satisfies the desired condition and makes the bound tight.
However, if $f(x)$ does not lose information about $x$, then $P(x|f(x))$ should
be unimodal and in fact be just 1. In other words, we should penalize the
reconstruction error term strongly enough to make the reconstruction error
nearly zero. That will guarantee that $f(x)$ keeps all the information about $x$
(at least for $x$ in the training set) and that the optimal $P(x|h)$ fits our
parametric family.

\subsection{Parameters and Training}

The training objective is thus a lower bound on the true log-likelihood:
\begin{equation}
\label{eq:log-likelihood}
 \log P(x) \geq \log P^*(x) = \log P(x|H=f(x)) + \log P(H=f(x))
\end{equation}
which can be seen to have three kinds of parameters:
\begin{enumerate}
\item The encoder parameters associated with $f$.
\item The decoder parameters associated with $P(x|h)$.
\item The prior parameters associated with $P(h)$.
\end{enumerate}

Let us consider how these three sets of parameters could be optimized
with respect to the log-likelihood bound (Eq.~\ref{eq:log-likelihood}).
The parameters of $P(h)$ can be learned by maximum likelihood (or any proxy for it),
with training examples that are the $h=f(x)$ when $x \sim Q(X)$
is from the given dataset. For example, if $P(H)$ is a factorized
Binomial, then the parameters are the probabilities \mbox{$p_i=P(H_i=1)$}
which can be learned by simple frequency counting of these events.
The parameters for $P(x|h)$ can be learned by maximizing the conditional
likelihood, like any neural network with a probabilistic output.
For example, if $X|H$ is a factorized Binomial, then we just have
a regular neural network taking $h=f(x)$ as input and $x$ as target,
with the cross-entropy loss function.

\subsection{Gradient Estimator for $f$}

One challenge is to deal with the training of the parameters of the
encoder $f$, because $f$ is discrete. We need to estimate a gradient
on these parameters in order to both optimize $f$ with respect to $\log P(h=f(x))$
(we want $f$ to produce outputs that can easily be modeled by the 
family of distributions $P(H)$) and with respect to $\log P(x|h=f(x))$
(we want $f$ to keep all the information about $x$, so as to be able
to reconstruct $x$ from $h=f(x)$). Although the true gradient
is zero, we need to obtain an update direction (a pseudo-gradient)
to optimize the parameters of the encoder with respect to these
two costs. A similar question was raised in a different context
in~\citet{Bengio-arxiv2013,bengio2013estimating}, and a number
of possible update directions were compared. In our experiments
we considered the special case where
\[
 f_i(x) = 1_{a_i(x)>0}
\]
where $a_i$ is the activation of the $i$-th output unit of the encoder before
the discretizing non-linearity is applied.
The actual encoder output is discretized, but we are interested in obtaining
a ``pseudo-gradient'' for $a_i$, which we will back-propagate inside
the encoder to update the encoder parameters.
What we did in the experiments is to compute the derivative of the reconstruction loss 
and prior loss \begin{equation}
\label{eq:loss}
{\cal L}=-\log P(x|h=f(x)) - \log P(h=f(x)),
\end{equation}
with respect to $f(x)$,
as if $f(x)$ had been continuous-valued.
We then used the {\bf Straight-Through Pseudo-Gradient}:
the update direction (or pseudo-gradient)
for $a$ is just set to be equal to the gradient with respect to $f(x)$:
\[
  \Delta a = \frac{\partial {\cal L}}{\partial f(x)}.
\]
The idea for this technique was proposed by~\citet{Hinton-Coursera2012} and
was used very succesfully in~\citet{bengio2013estimating}. It clearly has
the right sign (per value of $a_i(x)$, but not necessarily overall) but does not
take into account the magnitude of $a_i(x)$ explicitly.

Let us see how the prior negative log-likelihood bound can be written as a function of $f(x)$
in which we can pretend that $f(x)$ is continuous. For example, if
$P(H)$ is a factorized Binomial, we write this negative log-likelihood as the
usual cross-entropy:
\[
 - \sum_i f_i(x) \log P(h_i=1) + (1-f_i(x)) \log (1-P(h_i=1))
\]
so that if $P(x|h)$ is also a factorized Binomial, the overall loss is
\begin{align}
\label{eq:binomial-loss}
{\cal L} = & - \sum_i f_i(x) \log P(h_i=1) + (1-f_i(x)) \log (1-P(h_i=1)) \nonumber \\
           & - \sum_j x_j \log P(x_j=1|h=f(x)) + (1-x_j) \log (1-P(x_j=1|h=f(x)))
\end{align}
and we can compute $\frac{\partial {\cal L}}{\partial f_i(x)}$ as if $f_i(x)$ had been
a continuous-valued variable. All this is summarized in Algorithm~\ref{alg:dga}.

\begin{algorithm}[ht]
\caption{Training procedure for Directed Generative Autoencoder (DGA).}
\label{alg:dga}
\begin{algorithmic}
\STATE $\bullet$ Sample $x$ from training set.
\STATE $\bullet$ Encode it via feedforward network $a(x)$ and $h_i=f_i(x)=1_{a_i(x)>0}$.
\STATE $\bullet$ Update $P(h)$ with respect to training example $h$.
\STATE $\bullet$ Decode via decoder network estimating $P(x|h)$.
\STATE $\bullet$ Compute (and average) loss ${\cal L}$, as per Eq.~\ref{eq:loss},
e.g., in the case of factorized Binomials as per Eq.~\ref{eq:binomial-loss}.
\STATE $\bullet$ Update decoder in direction of gradient of $-\log P(x|h)$ w.r.t. the decoder parameters.
\STATE $\bullet$ Compute gradient $\frac{\partial \cal L}{\partial f(x)}$ as if $f(x)$ had been continuous.
\STATE $\bullet$ Compute pseudo-gradient w.r.t. $a$ as $\Delta a =  \frac{\partial \cal L}{\partial f(x)}$.
\STATE $\bullet$ Back-propagate the above pseudo-gradients (as if they were true gradients of the loss on $a(x)$)
inside encoder and update encoder parameters accordingly.
\end{algorithmic}
\end{algorithm}

\section{Greedy Annealed Pre-Training for a Deep DGA}

For the encoder to twist the data into a form that fits the prior $P(H)$,
we expect that a very strongly non-linear transformation will be required. As deep
autoencoders are notoriously difficult to train~\citep{martens2010hessian}, adding the extra constraint
of making the output of the encoder fit $P(H)$, e.g., factorial, was found
experimentally (and without surprise) to be difficult.

What we propose here is to use a an annealing (continuation method) and a greedy pre-training strategy,
similar to that previously proposed to train Deep Belief Networks 
(from a stack of RBMs)~\citep{Hinton06-small} or deep autoencoders (from a stack of shallow
autoencoders)~\citep{Bengio-nips-2006-small,Hinton+Salakhutdinov-2006}.

\subsection{Annealed Training}

Since the loss function of Eq.\ref{eq:dga} is hard to optimize directly, we consider a generalization
of the loss function by adding trade-off parameters to the two terms in the loss function corresponding
to the reconstruction and prior cost. A zero weight for the prior cost makes the loss function
same as that of a standard autoencoder, which is a considerably easier optimization problem.

\begin{align}
\label{eq:relaxed-binomial-loss}
{\cal L} = & - \beta \sum_i f_i(x) \log P(h_i=1) + (1-f_i(x)) \log (1-P(h_i=1)) \nonumber \\
           & - \sum_j x_j \log P(x_j=1|h=f(x)) + (1-x_j) \log (1-P(x_j=1|h=f(x))).
\end{align}

\subsubsection{Gradient Descent on Annealed Loss Function}

Training DGAs with fixed trade-off parameters is sometimes difficult because it is
much easier for gradient descent to perfectly optimize the prior cost by making $f$ map
all $x$ to a constant $h$. This may be a local optimum or a saddle point, escaping from which is 
difficult by gradient descent.
Thus, we use the tradeoff paramters to make the model first learn perfect reconstruction,
by setting zero weight for the prior cost. $\beta$ is then gradually increased to 1.
The gradual increasing schedule for $\beta$ is also important, as any rapid growth in $\beta$'s value
causes the system to `forget' the reconstruction and prioritize only the prior cost.

A slow schedule thus ensures that the model learns to reconstruct as well as to fit the prior.

\subsubsection{Annealing in Deep DGA}

Above, we describe the usefulness of annealed training of a shallow DGA. A similar trick
is also useful when pretraining a deep DGA. We use different values for these tradeoff parameters
to control the degree of difficulty for each
 pretraining stage. Initial stages have a high weight for reconstruction, and a low weight for 
 prior fitting, while the final stage has $\beta$ set to unity, giving back the 
 original loss function. Note that the lower-level DGAs can sacrifice on prior
 fitting, but must make sure that the reconstruction is near-perfect, so that no information is lost.
 Otherwise, the upper-level DGAs can never recover from that loss, which will show up in
 the high entropy of $P(x|h)$ (both for the low-level decoder and for the global decoder).

\section{Relation to the Variational Autoencoder (VAE) and Reweighted Wake-Sleep (RWS)}

The DGA can be seen as a special case of the Variational Autoencoder
(VAE), with various versions introduced
by~\citet{Kingma+Welling-ICLR2014,Gregor-et-al-ICML2014,Mnih+Gregor-ICML2014,Rezende-et-al-arxiv2014},
and of the Reweighted Wake-Sleep (RWS)
algorithm~\citep{Bornschein+Bengio-arxiv2014-small}.

The main difference between the DGA and these models is that with the latter the
encoder is stochastic, i.e., outputs a sample from an encoding (or approximate inference)
distribution $Q(h|x)$ instead of $h=f(x)$. This basically gives rise to a training
criterion that is not the log-likelihood but a variational lower bound on it,
\begin{equation}
\label{eq:vae-criterion}
  \log p(x) \geq E_{Q(h|x)}[ \log P(h) + \log P(x|h) - \log Q(h|x)].
\end{equation}
Besides the fact that $h$ is now sampled, we observe that the training criterion
has exactly the same first two terms as the DGA log-likelihood, but it also has
an extra term that attempts to maximize the conditional entropy of the encoder output,
i.e., encouraging the encoder to introduce noise, to the extent that it does not
hurt the two other terms too much. It will hurt them, but it will also help
the marginal distribution $Q(H)$ (averaged over the data distribution $Q(X)$)
to be closer to the prior $P(H)$, thus encouraging the decoder to contract
the ``noisy'' samples that could equally arise from the injected noise in $Q(h|x)$
or from the broad (generally factorized) $P(H)$ distribution.

\section{Experiments}
In this section, we provide empirical evidence for the feasibility of the proposed model, 
and analyze the influence of various techniques on the performance of the model.

We used the binarized MNIST handwritten digits dataset. We used the same binarized
version of MNIST as \cite{Uria+al-ICML2014}, and also used the same training-validation-test
split.

We trained shallow DGAs with 1, 2 and 3 hidden layers, and deep DGAs composed of
2 and 3 shallow DGAs. The dimension of $H$ was chosen to be 500, as this is 
considered sufficient for coding binarized MNIST. $P(H)$ is modeled as 
a factorized Binomial distribution.

Parameters of the model were learnt using minibatch gradient descent, with minibatch 
size of 100. Learning rates were chosen from {10.0, 1.0, 0.1}, and halved whenever
the average cost over an epoch increased. We did not use momentum or L1, L2 regularizers.
We used tanh activation functions in the hidden layers and sigmoid outputs.

While training, a small salt-and-pepper noise is added to the decoder input to make it
robust to inevitable mismatch between the encoder output $f(X)$ and samples from the prior, $h \sim P(H)$. Each bit of decoder input is selected with 1\% probability and changed to $0$ or $1$ randomly.

\begin{figure}[t]
\begin{center}
\begin{tabular}{c c}

\includegraphics[width=6.5cm]{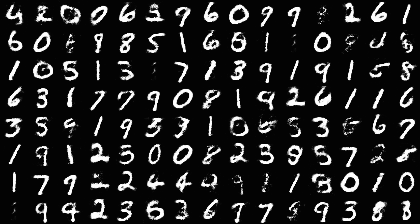} & \includegraphics[width=6.5cm]{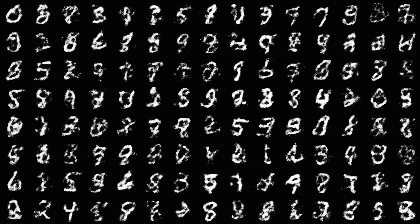} \\
(a) & (b)
\end{tabular}

\caption{(a) Samples generated from a 5-layer deep DGA, composed of 3 shallow DGAs 
of 3 layers (1 hidden layer) each. The model was trained greedily by training the 
first shallow DGA on the raw data and training each subsequent shallow DGA
on the output code of the previous DGA. (b) Sample generated from a 3-layer shallow DGA.}
\label{fig:samples}
\end{center}
\end{figure}

\subsection{Loglikelihood Estimator}
When the autoencoder is not perfect, i.e. the encoder and decoder are not
perfect inverses of each other, the loglikelihood estimates using
Eq. \ref{eq:binomial-loss} are biased estimates of the true loglikehood. We
treat these estimates as an unnormalized probability distribution, where
the partition function would be one when the autoencoder is perfectly
trained. In practice, we found that the partition function is less than
one. Thus, to compute the loglikehood for comparison, we estimate the
partition function of the model, which allows us to compute normalized
loglikelihood estimates. If $P^*(X)$ is the unnormalized probability
distribution, and $\pi(X)$ is another tractable distribution on $X$ from
which we can sample, the
partition function can be estimated by importance sampling:
$$Z = \sum_x P^*(x) = \sum_x \pi(x) \frac{P^*(x)}{\pi(x)} = E_{x \sim
  \pi(x)}[\frac{P^*(x)}{\pi(x)}]$$

As proposal distribution $\pi(x)$, We took $N$ expected values $\mu_j = E[X | H_j]$ under the decoder distribution,
for $H_j \sim P(H)$, and use them as centroids for a mixture model,

   $$\pi(x) = \frac{1}{N} \sum_{j=1}^N FactorizedBinomial(x; \mu_j).$$

Therefore,
$$log(P(X)) = log(P^*(X)) - log(Z)$$ gives us the estimated normalized loglikelihood.

\subsection{Performance of Shallow vs Deep DGA }
The 1-hidden-layer shallow DGA gave a loglikelihood estimate of -118.12 on the test set. The 5-layer deep DGA, 
composed of two 3-layer shallow DGAs trained greedily, gave a test set loglikelihood estimate of -114.29. We observed
that the shallow DGA had better reconstructions than the deep DGA. The deep DGA sacrificed on the reconstructibility, but 
was more successful in fitting to the factorized Binomial prior.

Qualitatively, we can observe in Figure~\ref{fig:samples} that samples from the deep DGA are much better than those from the shallow DGA.
The samples from the shallow DGA can be described as a mixture of incoherent MNIST features: although all the decoder units
have the necessary MNIST features, the output of the encoder does not match well the factorized Binomial of $P(H)$,
and so the decoder is not correctly mapping these unusual inputs $H$ from the Binomial prior to data-like samples.

The samples from the deep DGA are of much better quality. However, we can see that some of the samples are non-digits. This is due to the fact that the autoencoder had to sacrifice some reconstructibility to fit $H$ to the prior. The reconstructions
also have a small fraction of samples which are non-digits.


\subsection{Entropy, Sparsity and Factorizability}

We also compared the entropy of the encoder output and the raw data, under a factorized
Binomial model. The entropy, reported in Table~\ref{tab:entropy},
is measured with the logarithm base 2, so it counts
the number of bits necessary to encode the data under the simple factorized Binomial distribution.
A lower entropy under the factorized distribution means that fewer independent units are necessary to 
encode each sample. It means that the probability mass has been moved from a highly complex
manifold which is hard to capture under a factorized model and thus requires many dimensions
to characterize to a manifold that is aligned with a smaller set of dimensions, 
as in the cartoon of Figure~\ref{fig:ddga}. Practically this happens when many
of the hidden units take on a nearly constant value, i.e., the representation
becomes extremely ``sparse'' (there is no explicit preference for 0 or 1 for $h_i$ but one could
easily flip the sign of the weights of some $h_i$ in the output layer to make sure that 0 is
the frequent value and 1 the rare one). Table~\ref{tab:entropy} also contains a measure of sparsity of the 
representations based on such a bit flip (so as to make 0 the most frequent value).
This flipping allows to count the average number of 1's (of rare bits) necessary to
represent each example, in average (third column of the table).

We can see from Table~\ref{tab:entropy}
that only one autoencoder is not sufficient to reduce the entropy down
to the lowest possible value. Addition of the second autoencoder reduces
the entropy by a significant amount. 

Since the prior distribution is factorized, the encoder has to map data
samples with highly correlated dimensions, to a code with independent
dimensions. To measure this, we computed the Frobenius norm of the
off-diagonal entries of the correlation matrix of the data represented in its
raw form (Data) or at the outputs of the different encoders. See the 3rd columns
of Table~\ref{tab:entropy}. We see
that each autoencoder removes correlations from the data representation,
making it easier for a factorized distribution to model.

\begin{table}[H]
\begin{center}
\begin{tabular}{|c| c| c|c|}
\hline
Samples & Entropy & Avg \# active bits & $||Corr-diag(Corr)||_{F}$ \\
\hline
Data ($X$) & 297.6 & 102.1 & 63.5 \\
Output of $1^{st}$ encoder ($f_1(X)$) & 56.9  & 20.1 & 11.2\\
Output of $2^{nd}$ encoder ($f_2(f_1(X))$) & 47.6 & 17.4 & 9.4\\ 
\hline
\end{tabular}
\end{center}
\caption{Entropy, average number of active
bits (the number of rarely active bits, or 1's if 0 is the most frequent bit value, i.e. a measure of non-sparsity) 
and inter-dimension correlations, 
decrease as we encode into higher levels of representation,
making it easier to model it.}
\label{tab:entropy}
\end{table}

\section{Conclusion}

We have introduced a novel probabilistic interpretation for autoencoders as generative models,
for which the training criterion is similar to that of regularized (e.g. sparse) autoencoders
and the sampling procedure is very simple (ancestral sampling, no MCMC). We showed that this
training criterin is a lower bound on the likelihood and that the bound becomes tight as
the decoder capacity (as an estimator of the conditional probability $P(x|h)$) increases.
Furthermore, if the encoder keeps all the information about the input $x$, then the optimal
$P(x|h=f(x))$ is unimodal, i.e., a simple neural network with a factorial output suffices.
Our experiments showed that minimizing the proposed criterion yielded good generated
samples, and that even better samples could be obtained by pre-training a stack of
such autoencoders, so long as the lower ones are constrained to have very low reconstruction
error. We also found that a continuation method in which the weight of the prior term
is only gradually increased yielded better results. 

These experiments are most interesting because they reveal a picture of the representation
learning process that is in line with the idea of manifold unfolding and transformation from
a complex twisted region of high probability into one that is more regular (factorized)
and occupies a much smaller volume (small number of active dimensions). They also help
to understand the respective roles of reconstruction error and representation prior
in the training criterion and training process of such regularized auto-encoders.

\subsection*{Acknowledgments}

The authors would like to thank Laurent Dinh, Guillaume Alain and Kush Bhatia for fruitful discussions, and also the developers of Theano~\citep{bergstra+al:2010-scipy,Bastien-Theano-2012}. We acknowledge the support of the following agencies for
research funding and computing support: NSERC, Calcul Qu\'{e}bec, Compute Canada,
the Canada Research Chairs and CIFAR.

\bibliography{strings,strings-shorter,ml,aigaion}
\bibliographystyle{natbib}

\end{document}